%This is a template file for use of iopjournal.cls

\documentclass{iopjournal}

\usepackage{amsmath}   % Fixes \lVert and \rVert
\usepackage{amssymb}   % For \mathbb
\usepackage{booktabs}  % For \toprule, \midrule
\usepackage{xcolor}
\usepackage[linesnumbered,ruled,vlined]{algorithm2e}

\SetCommentSty{mycommfont}
\SetKwInput{KwInput}{Input}                % Set the Input
\SetKwInput{KwOutput}{Output}              % set the Output
\usepackage{amsmath}  % Best practice for math support
\usepackage{amsthm}   % Required for theorem environments

% Define the theorem environment
\newtheorem{theorem}{Theorem}
\usepackage{pbox}
\usepackage{cite}

\begin{document}

\articletype{Paper} %	 e.g. Paper, Letter, Topical Review...

\title{Gibbs randomness-compression proposition}

\author{M S\"uzen  \orcid{0000-0002-9460-7297}} \\
\affil{$^1$Member, American Physical Society, College Park, Maryland, United States} \\
\affil{$^2$Resident Scientist, Assia, CY 5561, Cyprus} \\
\email{mehmet.suzen@physics.org}

\keywords{coarse-graining, learning, randomness, entropy, compressed sensing}

\begin{abstract}
A proposition that connects randomness and compression is put 
forward via Gibbs entropy over set of measurement vectors associated 
with a lossy compression process. In building this connection, we use 
a performance of a learning task as a probe of compression in iterative
compress-train cycles. This can be thought as iterative coarse-graining 
from statistical mechanics perspective using thermodynamic efficiency as 
a probe. We formulate this connection via comonotonic relationship within 
a very small decrease in compression ratio and the performance. We have 
showcase the validity of this proposition with a canonical vision task in 
deep learning with three different model compression processes 
as {\it a baseline model}. We use the following, simpler to more 
complex model compression approaches: (1) random pruning,(2) 
magnitude pruning, and (3) a more complex compression by using 
dual tomographic compression, which utilizes compressed sensing 
in dual fashion which is introduced as a new method. We use 
remaining weights of deep learning network as a measurement vector 
where we measure the Gibbs entropy. We show case the idea that there 
is an inherent computable connection between compression probed by 
performance and randomness from an entropy measure on the learned
model.
\end{abstract}

\section{Introduction}

Connections between randomness and data compression are noted earlier 
within the algorithmic information framework of Solomonoff-Kolmogorov-Chaitin 
\cite{solomonoff64, kolmogorov65, chaitin66} (SKC). These works aim at 
reaching a definition of randomness from model selection perspective 
\cite{chaitin75}, i.e., algorithmic entropy or a complexity measure. Striking 
connections to algorithmic entropy and Gibbs entropy are also explored from 
fundamental conceptual point of view \cite{bennett82, zurek89, machta99a}. 
Statistical mechanics connections and information are established between 
Gibbs and Boltzmann entropies \cite{jaynes65a,jaynes57a, buonsante16, 
rondoni00, wu26}. However, relationship among compression process, entropy, 
information and randomness is always made quite generic in SKC framework. 
This manifestation was by design quite generic without strict mathematical 
condition, i.e., aimed at finding the shortest algorithm or explanation that 
can generate the random sequence. In this work, we propose a mathematical 
condition among these concepts in a more formal manner. As interplay between 
physics and deep machine learning is now well-established \cite{sherrington, 
hopfield82, ackley85, welling26}.

Moreover, randomness appear as a critical component of contemporary needs
such as the certification of randomness \cite{amer25}, understanding the 
stability of learning in modern models \cite{pecher24}, its relationship 
between quantum complexity \cite{guo24}, improving randomness in quantum 
information \cite{kulikov26}, and its relation to entropy in natural 
sciences \cite{luecher25}. Consequently, building quantitative bridge 
between randomness and a compression process is also a practical interest 
beyond theoretical advancement.

In Section \ref{sec:back}, we introduce the background. Then, we introduce 
inverse compressed sensing in Section \ref{sec:inversecs}. In Section 
\ref{sec:dtc}, we introduced dual tomographic compression, as a simple 
neuron level pruning approach, that is more complex than both random and 
magnitude pruning. We introduce the formal proposition and the numerical 
example as a baseline model in Section \ref{sec:gibbs} and \ref{sec:exp}
respectively. In Section \ref{sec:con}, we conclude our study.

\section{Background} \label{sec:back}

The definition of randomness \cite{compagner91} and quality of generating
random numbers \cite{oneill14} appears to be connected subjects within the
complexity, compressing data \cite{zurek89}, and SKC algorithmic randomness 
context. This framework commonly known as Kolmogorov Complexity (KC), 
algorithmic complexity or algorithmic randomness and its applications 
are vast \cite{li08, zurek91}. KC or SKC framework, makes a generic 
statement about information content of sequence $s$ of binary digits of
length $N$, $K(s)$, defined to be the size of the smallest program in bits 
that generates $s$, the sequence. The connection to compression is implicit 
here and its mathematical definition to compression process appears as only 
a single compression step. It is hinted that there is a positive correlation 
between compression ratio and randomness \cite{li08}. Moreover, the interest 
of randomness in this manner also explored from biological evolution stand 
point \cite{wagner12}, which is more relevant in our case due to proposed 
train-compress framework having mutation-like process. We have summarized
the concepts used here in Table \ref{tab:concept}, with 
a mapping between statistical physics and machine learning with statistical
signal processing. This table helps to clarify the concepts used 
in the paper.

Compressed sensing (CS) framework has shown huge leap in statistical signal
processing and information theory as it shows one can reconstruct an unknown 
signal with much less measurements under sparsity constraints \cite{donoho06a, 
candes06, candes06tao, baraniuk07, eldar12a}. CS involves random sampling and 
it provides an excellent tool to investigate the relationship between compression 
and randomness.

\begin{table}[htbp]
\centering
\begin{tabular}{lll}
\toprule
\textbf{Machine and Statistical Learning} & 
\textbf{Statistical Physics}              & 
\textbf{Notes} \\ 
\midrule
% Row
Learn (Train)                          &  
\pbox{20cm}{Thermalization \\ 
            Finding ground state}      & 
\pbox{20cm}{Minimizing Free Energy \\ 
            Optimal model weights} \\
\midrule
% Row 
\pbox{20cm}{(Model) Compression \\ 
            Complexity Reduction}      & 
Coarse-Graining                        & 
\pbox{20cm}{Reduction In: \\
            Information content \\ 
            Degrees of freedom \\
            Model size reduction} \\
\midrule
Compress-train                        & 
\pbox{20cm}{Coarse-Graining then\\ 
            finding ground state}     &
\pbox{20cm}{Model size reduction \\
            then optimal weights}  \\
\midrule
Gibbs Entropy & Thermodynamic Entropy & Logarithm of accessible phase space \\
\midrule
Randomness & Thermal Fluctuations & Random process \\
\midrule
Complexity & Degrees of Freedom & Number of independent weights \\
\midrule
Performance & Thermodynamic Efficiency & Useful work, accuracy \\ 
\midrule
Clip-Pruning & Remove interaction & Relearning after removed weights \\  
\midrule
Compressed Sensing (CS) & Reduced measurements (RM) & Sparse recovery (SR) \\  
\bottomrule
Dual tomographic compression & Two-sided RM & Two-sided SR \\  
\bottomrule
\end{tabular}
\caption{Conceptual mapping between statistical physics and machine 
learning with statistical signal processing. These mappings are 
not too strict, only to give a basic idea as a summary in our context.}
\label{tab:concept}
\end{table}

Reducing size of neural networks has attracted research interest since their 
inception, such as earlier works on Hopfield networks \cite{janowsky89} in 
statistical physics community. This pioneering work identifies {\it clipping} 
as removing existing connections and {\it pruning} re-learning, i.e., fine-tuning 
after clipping. Similar clipping approach first applied in computer science by
LeCun et al. \cite{lecun89}, fun title of brain damage is used, reflecting the 
performance degradation due to clipping, hinting at magnitude pruning, so called 
{\it small saliency} that small weights don't affect the performance that much, also
recently for semi-supervised learning \cite{shwartz2024compress}. With the advent of 
deep learning \cite{lecun15a, schmidhuber15a} and generative models producing ever 
increasing size large deep learning architectures due to attention \cite{vaswani17} 
and diffusion \cite{sohl15} mechanisms, necessitates producing much smaller deep 
learning models for both training and inference efficiency, as energy consumption 
and ethical use is raised as the most important aspect in industry and academic 
practice \cite{menghani23, zhu25} and sparse model size reduction post-model 
training became important \cite{hoefler2021sparsity}, for dynamic cases as 
well \cite{han2021dynamic, tatarnikova2025optimization}. {\it Lottery ticket 
hypothesis} \cite{frankle18}, provides a deeper perspective in compressing or 
finding smaller deep learning architectures, via magnitude pruning, i.e., brain 
damage, stating that subnetworks are possible that give similar performance as 
the initial deep architecture due to initialization chance, i.e., a lottery ticket.
Applying the lottery ticket approach, iteratively pruning during training,
leads to train-compress procedure, a compression process, which is the main example 
tool for our quest for finding relationship between randomness and compression. Such 
process of train-compress (or compression aware training) as a meta algorithm has 
been already proposed \cite{han15, carreira18, zimmer25}, before lottery tickets were 
proposed. Effectiveness of random only pruning also reported recently \cite{gadhikar23}.
Using compressed sensing in pruning recurrent networks is also explored, however it 
was only on to reduce size of independently appearing network components not on 
connections \cite{zhen21a}. Using deep learning within compressed sensing framework 
is also explored recently in a different angle \cite{wu2019deep, machidon2023deep}.
The contributions of the paper are presented in the following sections {\it Inverse 
Compressed Sensing}, {\it Dual Tomographic Compression} and the main proposition, {\it 
Gibbs randomness-compression proposition}. These are supported by in-depth learning 
task on canonical vision deep learning test system as a baseline experimental 
evidence.

\begin{figure}[h!]
\centering
  \includegraphics[width=0.48\textwidth]{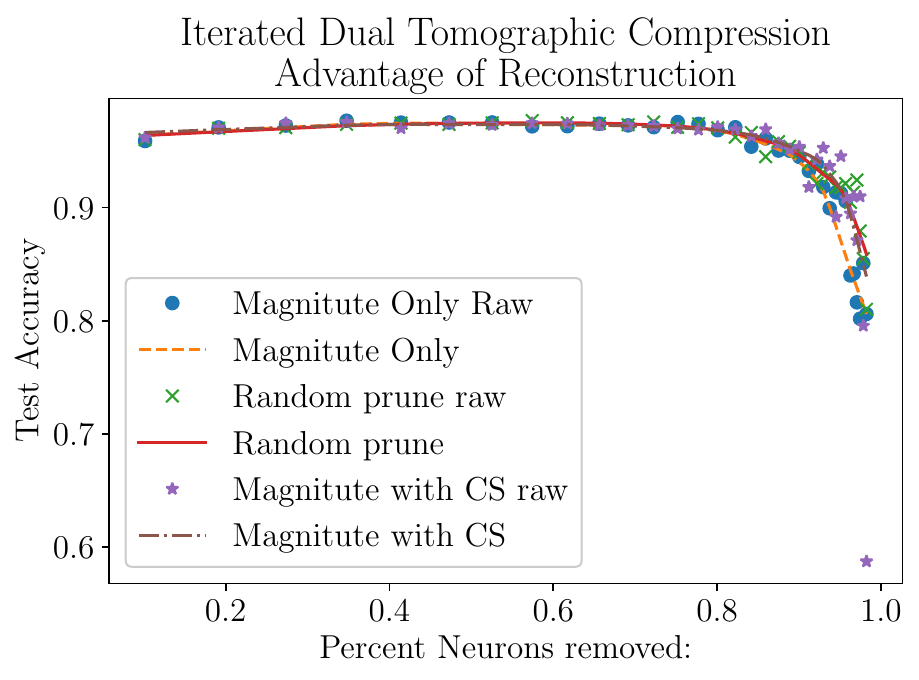}
  \caption{DTC accuracy over different sparsity levels in train-compress 
  iterative pruning. }
  \label{dtc:accuracy}
\end{figure}

\begin{figure}[h!]
\centering
  \includegraphics[width=0.48\textwidth]{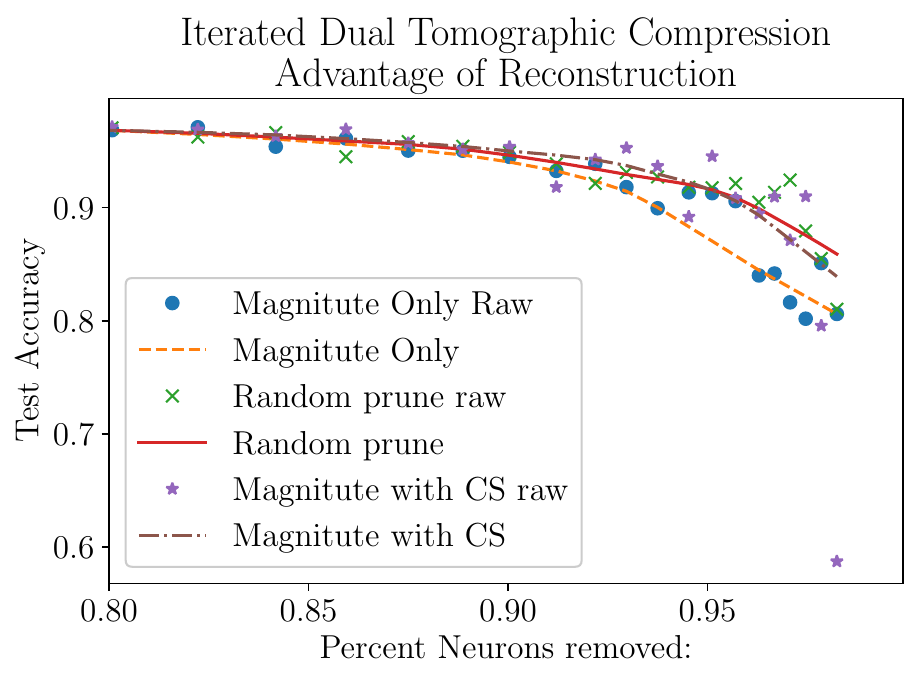}
  \caption{DTC accuracy over different sparsity levels in train-compress 
  iterative pruning. Inset shows a magnified view of higher sparsity levels.}
  \label{dtc:accuracy}
\end{figure}

\section{Inverse Compressed Sensing} \label{sec:inversecs}

We apply compressed sensing on the learned weights. However, as weights 
are known, instead of measuring the signal (weights), we do have it. This
implies so called {\it inverse compressed sensing}, not to be confused with 
inverse problem nature of the original compressed sensing framework. Inverse 
here implies, we construct an hypothetical measurement process from known 
signal, and try to reconstruct the signals sparse version via conventional 
compressed sensing, i.e.,so called  {\it weight rays}. In case of deep learning 
layers, total weights associated to neuron's input and output will be used 
instead of single CS, dual nature of tomographic process. Essentially, we 
found a sparse projection of the learned weights, which might be sparser due to
compressed sensing framework. In the notation, we will use weights, instead of 
calling a signal vector \cite{baraniuk07}.  

Given, learned weights vector $w \in \mathbb{R}^{m}$, that is flatten from any
layer matrix without any dependency on the layer type, representing all neurons 
state at a given learning time. Our purpose is to find sparse projection of 
$w$, $w_{s} \in \mathbb{R}^{n}$ where by $n$ is defined via sparsity level 
$s$, $n = m \cdot s$. Within CS framework, we define two matrices, one measurement
matrix $\Psi \in \mathbb{R}^{mxm}$, here discrete cosine transform matrix and 
randomization matrix, here normal Gaussian, $\Phi \in \mathbb{R}^{nxm}$. Then CS 
matrix reads $\Theta=\Phi \cdot \Psi \in \mathbb{R}^{nxm}$. The inverse bit as we 
discusses, comes to play when we generate a hypothetical measurement vector, 
$y = \Phi \cdot w$ as a measurement vector.  Then we minimize the following 
unconstrained version of CS procedure, to reconstruct the learned weights, 
so called {\it weight ray} $w_{r}$,

$$ \min_{w_{r}}  \left\lVert \Theta w_{r} - y \right\rVert^{2}  + 
   \lambda \left\lVert w_{r} \right\rVert_{1} $$

Obtaining different $w_{r}$ given different sparsity levels $s$ corresponds 
to neuronal level pruning in our canonical example, if magnitudes ranked in 
some manner. However, such inverse compressed sensing procedure can be applied
to any compression process whereby hypothetical measurement could be generated.

\begin{figure}[h!]
\centering
  \includegraphics[width=0.48\textwidth]{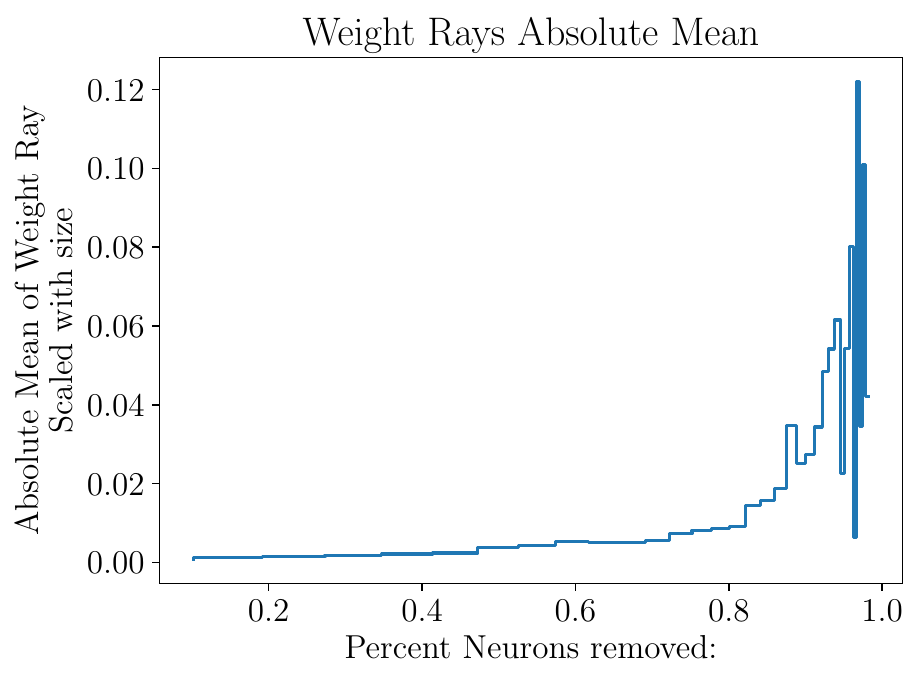}
  \caption{Weight-rays, total of reconstructed weight over different sparsity 
  levels.}
  \label{rays}
\end{figure}

\section{Dual Tomographic Compression: Neuronal-level pruning} \label{sec:dtc}

Compressed sensing approach proposed in an inverse fashion can be utilized 
for neuronal-level pruning. This can be achieved by applying inverse compressed 
sensing in a dual fashion on a layer with $n$ neurons. Weight vector can be 
obtained as follows:  a fully-connected previous layer with $p$ neurons, weight 
matrix $W_{p} \in \mathbb{R}^{pxn}$ and the next layer with $q$ neurons, weight 
matrix $W_{q} \in \mathbb{R}^{nxq}$, are used to sum over rows, resulting per 
neuron values in obtaining {\it weight rays} $w_{p} \in \mathbb{R}^{n}$ and $w_{q} \in \mathbb{R}^{n}$. 

These weight vectors are reconstructed via inverse compressed sensing, in dual fashion:
$$ \min_{w_{r}^{p}}  \left\lVert \Theta w_{r}^{p} - y_{p} \right\rVert^{2}  
   + \lambda \left\lVert w_{r}^{p} \right\rVert_{1} $$
$$ \min_{w_{r}^{q}}  \left\lVert \Theta w_{r}^{q} - y_{q} \right\rVert^{2}  
   + \lambda \left\lVert w_{r}^{q} \right\rVert_{1} $$

Then the reconstructed sparse projections summed, $w_{r}^{p}+w_{r}^{q}$ are 
used to clip neurons where their weight rays' sum is lower than the target 
sparsity. This is achieved via clipping at the  quantile value corresponding 
to the  sparsity level, between $0.0$ and $1.0$, from no sparsity to higher 
sparsity values, respectively. This compression is applied at each learning 
cycle, $m$ batches, forming the so-called train-compress cycle. After clipping,
we retain the weights from previous learning cycle and continue to train further 
at this new reduced number of neurons as a pruning step, i.e., fine-tuning.
This entire iterative pruning procedure is called {\it Dual Tomographic 
Compression (DTC)}.

\subsection{Interactions between consecutive reconstructions}

Here, we take one of the simplest interactions as $w_{r}^{p}+w_{r}^{q}$ for 
clipping neurons in a ranked manner. However, as this is a framework more 
complex interactions could be developed further but our purpose here is to 
establish the DTC's utility as a framework. 

\subsection{Layer types and propagation over layers}

We express the framework for {\it fully-connected} layers but DTC doesn't 
depends on the layer types within the neural network. Only requirement is 
mapping layer into {\it weight rays}, i.e., mapping incoming and outgoing 
connections into vectors. This can be achieved trivially by flattening any 
tensor representations of the layers to a one dimensional vector for DTC 
procedure, such as convolution or transformer architectures. 

DTC should be applied to each hidden-layer back-to back, so called in 
forward propagating the DTC. Not to be confused with backpropagation. 
This is a simple propagation of the DTC procedure. In the case of multiple 
hidden-layers, DTC procedure becomes easier as incoming connections would 
already pruned due to clipped neurons from the previous layer. 

We present this train-compress cycles in Algorithm 1 as an overall summary. 
Algorithm description aimed at presenting a representation of the framework 
with its logical components, rather than implementation details, in order to 
convey the core idea.

\begin{algorithm}[!ht]
\DontPrintSemicolon
  \KwInput{$N$ number of learn-compress cycles, $L$ hidden-layers with 
   $j$ as index, $n_{j}$ number of neurons at layer $j$, {\it weight rays} 
   $w_{j}^{p}$ and $w_{j}^{q}$, $s_{i}$ target rate of pruning
   (decreasing monotonically)
  }
  \KwOutput{$L$ hidden layers with reduced size}
  \KwData{Training set $X_{tr}$, testing set $X_{te}$}
  \tcc{Learn-compress cycles: Within batch learning, 
    smaller network at each batches}    
  \For{$i=1,...,N$} 
  {
    Train the network on the given batch(s) taken from $X_{tr}$\\
    \tcc{Layer-wise compression: DTC propagating over layers.}    
    \For{$j=1,...,L$}
    {
      Inverse Compressed sensing: \\
      Compute measurement matrices $\Theta$ and measurements vector $y$. \\
      Dual $\ell_{1}$ minimization, find weights rays $w_{j}^{p}$ and $w_{j}^{q}$. \\
      Clipping the layer (neurons) based on quantile at $s_{i}$ percent.  \\
      Ranking is achieved with using $w_{j}^{p}+w_{j}^{q}$ in order. \\
      Compress the layer: Update layers with kept neurons, updating $n_{j}$ \\
    }
    \tcc{For L >1 take mean of all Gibbs entropies}    
    Compute Gibbs entropy $G_{i}(s_{i})$ over measurement vectors $y_{j}$. \\
    Compute the performance $f_{i}(s_i)$ of the compressed network on the 
    test set $X_{te}$
  }
  \caption{An algorithmic summary of Dual Tomographic Compression (DTC)
   based train-compress cycles. This is a representation of the general
   approach in the framework. }
\end{algorithm}

\subsection{Computational Complexity}

An assessment of the computational complexity of applying DTC follows. The primary 
compute overhead due to DTC are  optimization cycles. This burden is compensated by 
the train-compress approach. Because in this framework we don't need to complete entire 
training before applying the compression, recall {\it compression aware training}. 
We quantify this by the number of optimization calls over cycles against if we were 
to run full training and compression. $M$ cycles of compression and $N$ batches 
for learning and compressions, number of Layers $L$.  

\begin{itemize}
\item DTC with train-compress \\ 
      Order of $N \cdot L$ calls to optimizers both learning and compression 
      in batch learning. Note that in this case $N=M$ as we embed the compression 
      within the training loop.
\item Full train compress without DTC \\
      Order of $N \cdot M$ calls to optimizers for learning and assuming 
      compression wouldn't need an optimizer call.  
\end{itemize}

We have deduced that, even though DTC brings some optimization load, 
computational complexity is compensated in a manner that, DTC framework is 
less compute heavy compare to full training and compress frameworks 
comparatively as $M$ compression cycles are usually much larger than the 
number of layers $L$ due to gradual compression, i.e., because compressing 
in large reductions in small steps are quite unlikely.

\section{Gibbs randomness-compression proposition} \label{sec:gibbs}

The proposition is inspired by DTC's formulation in the previous section. 
Given a compression process of $n$ steps on a vector $w$, it progresses 
with measurement vectors $y_{i}$, $i=1,...,n$ in an iterative fashion
with a random mechanism ${\bf R}$, as each step corresponds to a compression 
ratio $s_{i}$ and a fitness value, i.e., performance metric $f_{i}$. The sizes 
of measurement vectors are monotonically decreasing as the compression ratio 
compares to the initial signal. A normalized distribution of measurements is 
formed and its Gibbs entropy is computed as $G_{i}$, i.e., from normalized 
histogram bins of $k$, where $G_{i} = - \Sigma_{k} p_{k} log_{2} p_{k}$.

The Gibbs randomness-compression proposition states that there is an almost 
perfect relationship between the two functions $(s_{i}, f_{i})$ and 
$(s_{i}, G_{i})$, denoted by $f_{i}(s_{i})$ and $G_{i}(s_{i})$. This 
proposition establishes that {\it directed randomness}, because of its 
iterative nature,  within the compression process is related to compression 
performance. As compression with preserved fitness is related to the level 
of complexity, the proposition establishes a relationship between randomness 
and complexity. In other words, a lossy compression process is equivalent 
to directed randomness that preserves information content, as evidenced by 
the retention of performance or fitness. 

\subsection{Analysis of the proposition}

More formally, we express the idea in terms of relationship between 
two {\it comonotonic} functions, being highly correlated.

\begin{theorem}[Gibbs-randomness proposition]
Given {\it sequential lossy compression process} consists of $M$ 
consecutive compression cycle. Each compression cycle reduces the data 
size $s_i$ percent, representing model size,  with relative to uncompressed 
size, $i=1,..,M$, where $s_{0}=1.0$. At each compression cycle, a measure 
vector ${\bf y}_{i}$ due to {\it a randomized algorithm} with associated Gibbs
entropy $G_{i}(s_{i})$ and a corresponding learning performance $f_{i}(s_i)$ 
after the compression cycle i.e., performance on the reduced model, are generated.
If learning  performance degradation and entropy reduction due to compression kept 
small in consecutive cycles $|f_{i}-f_{i-1}| < \epsilon$ and 
$|G_{i}-G_{i-1}| < \delta$ during entire learn-compress cycles, 
then functions $f_{i}$ and $G_{i}$ are highly correlated.  
\end{theorem}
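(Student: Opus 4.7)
My plan is to interpret the conclusion as a statement about the Pearson correlation coefficient $\rho(f,G)$ computed across the $M$ compression cycles, and to argue it via a two-step reduction: first extracting comonotonicity from the hypothesis, then using the small per-step bounds $\epsilon_i,\delta_i$ to convert comonotonicity into quantitative closeness to $1$.

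First I would make the hypothesis precise. The wording ``performance degradation'' and ``entropy reduction'' should be read as asserting that $(f_i)$ and $(G_i)$ are both monotonically nonincreasing in $i$ as $s_i$ decreases, so the per-step bounds describe slowly descending curves sharing the same ordering. Under this reading, Chebyshev's sum inequality immediately gives
$$\mathrm{Cov}(f,G) \;=\; \frac{1}{M^{2}}\sum_{i<j}(f_i-f_j)(G_i-G_j) \;\geq\; 0,$$
so Spearman rank correlation equals $1$ and Pearson $\rho \geq 0$. This is the easy half.

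The substantive step is upgrading $\rho \geq 0$ to $\rho \approx 1$. Here I would telescope the total descent, $f_0 - f_M \leq \sum_i \epsilon_i =: E$ and $G_0 - G_M \leq \sum_i \delta_i =: D$, and then bound the residual of the best linear fit of $G$ on $f$. The key mechanism is that on any pair of consecutive indices both increments $(f_{i-1}-f_i,\,G_{i-1}-G_i)$ are uniformly small, so a Cauchy--Schwarz estimate on the residuals yields an inequality of the form $1 - \rho^{2} \leq C\,\max_i(\epsilon_i + \delta_i)/\min(\sigma_f,\sigma_G)$ for an explicit constant $C$. In the asymptotic regime $\max_i(\epsilon_i+\delta_i)\to 0$ with $E,D$ bounded away from zero, this gives $\rho \to 1$, justifying the phrase ``highly correlated.''

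The main obstacle is that the conclusion is not quantified in the statement: the hypothesis does not pin down the rate at which $\rho$ approaches $1$ as a function of $\{\epsilon_i\},\{\delta_i\}$, and in particular does not preclude degenerate cases where $\sigma_f$ or $\sigma_G$ is so small that the residual bound becomes vacuous. I would therefore either (i) strengthen the statement to an explicit inequality $\rho \geq 1 - C(E+D)/\min(\sigma_f,\sigma_G)$, or (ii) present it as a limit theorem over a family of compression schedules with $\max\epsilon_i,\max\delta_i \to 0$. The rest of the argument is a mechanical expansion of the covariance; the conceptual content sits entirely in making ``highly correlated'' precise and in the Cauchy--Schwarz step that converts local smoothness into global linearity of the $f$--$G$ trace.
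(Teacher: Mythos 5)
Your route is essentially the paper's own: read the hypothesis as saying both sequences descend monotonically in small quasi-static steps, deduce comonotonicity, and then argue that the smallness of the steps forces the two traces to ``co-move'' and hence correlate strongly. The paper's proof stops at exactly that qualitative assertion, so in trying to quantify the second half you are going beyond what the paper supplies --- and that is where a genuine error appears. The claimed bound $1-\rho^{2}\leq C\,\max_i(\epsilon_i+\delta_i)/\min(\sigma_f,\sigma_G)$ is false. Take $M=2m$ cycles with
\[
f_i=\max\!\left(1-\tfrac{i}{m},\,0\right),\qquad G_i=\min\!\left(1,\,2-\tfrac{i}{m}\right),\qquad i=1,\ldots,2m .
\]
Both sequences are nonincreasing, every increment has magnitude $1/m\to 0$, both total descents equal $1$, and $\sigma_f,\sigma_G$ stay bounded away from zero; yet $f$ does all of its descending in the first half of the schedule and $G$ in the second half, and a direct computation gives $\mathrm{Cov}(f,G)\to 1/16$, $\mathrm{Var}(f),\mathrm{Var}(G)\to 5/48$, hence $\rho\to 3/5$. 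So separate per-step smallness bounds plus monotonicity do not yield Pearson correlation near $1$, and your Cauchy--Schwarz step cannot go through without an additional hypothesis that the two descents are \emph{synchronized} --- e.g.\ a coupling of increments such as $|(f_{i-1}-f_i)-c\,(G_{i-1}-G_i)|$ uniformly small for some fixed $c>0$, or a monotone Lipschitz relation $f_i=\phi(G_i)+o(1)$. That synchronization is precisely what the paper's proof merely asserts (``comoving \ldots evolve synchronously'') without deriving it from the stated hypotheses, so the gap is in the theorem itself, not only in your write-up; but the specific inequality you propose as the fix does not hold.

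Two smaller points. First, the hypothesis as literally written only bounds $|f_i-f_{i-1}|$ and $|G_i-G_{i-1}|$; monotone descent is an added reading (one the paper also makes implicitly), and without it even your ``easy half'' $\rho\geq 0$ via Chebyshev's sum inequality is unavailable. Second, your diagnosis of the degenerate-variance issue is correct and worth keeping, and your instinct to restate the result as an explicit inequality or a limit theorem is the right one --- but the restated hypothesis must couple the increments of $f$ and $G$ to each other, not merely bound each separately, or the counterexample above survives every proposed reformulation.
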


\begin{proof}
A logical proof follows, functions $f_{i}(s_{i})$ and $G_{i}(s_{i})$ are 
non-increasing monotonically (comonotonic) due to quasi-static 
decreases $\epsilon_{i}$ and $\delta_{i}$ locally, while mean 
differences $\Delta$ are close to zero, 
$\Delta(\epsilon_{i})=\Delta(\delta_{i}) \rightarrow 0$ 
as $|s_{i}-s_{i-1}| \rightarrow 0$. These functions are {\it comoving} 
over compress-learn cycle (functions evolve synchronously), hence are highly 
correlated.
\end{proof}

We known that functions $f_{i}(s_{i})$ and $G_{i}(s_{i})$ are
{\it mechanistically connected} due to fact that measurement vector ${\bf y}_{i}$ 
is defined on the network weights, and learned weights directly 
effect the network learning performance. Because of random process, here Dual 
Tomographic Compression (DTC) generates ${\bf y}_{i}$, hence the theorem establishes 
the connection between randomness and compression.

The theorem is expressed for a single hidden-layer but it is readily applicable 
to entire neural network by simplify repeating it to all hidden-layers back to back 
at a given compress-train cycle. It can be applied to any layer type by mapping 
the weight structure to measurement vector ${\bf y}_{i}$. This is demonstrated 
in the Algorithm 1 in detail.

\begin{figure}[h!]
\centering
  \includegraphics[width=0.48\textwidth]{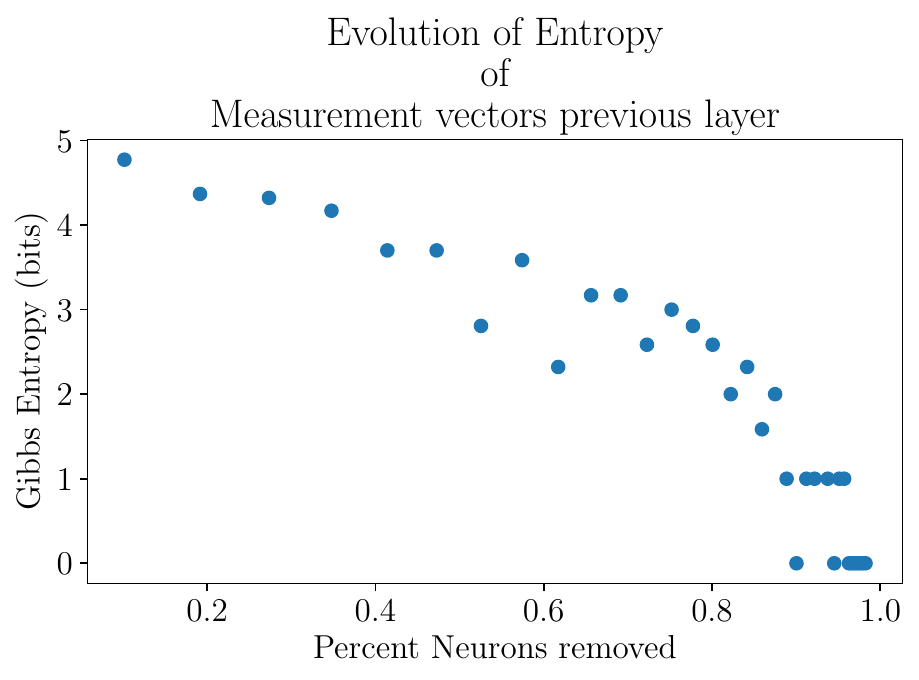}
  \caption{Previous layer Gibbs entropy of computed measurements within DTC over compression process.}
  \label{gibbs1}
\end{figure}

\section{Experiments and discussions} \label{sec:exp}

We have embedded DTC into a canonical vision task, classifying MNIST 
dataset \cite{lecun02} via softmax output. A deep learning model is 
implemented with hidden layer size of 512 neurons as starting point for 
train-compress iterative pruning. 

We apply dual compressed sensing at every 200 batches of size 512, 
triggering inverse compressed sensing procedure using SCS algorithm 
\cite{odono16} for minimizing for {\it weight rays}, for simplicity 
we didn't use bias terms. The selection of 200 batches were made such 
that performance is increasing sufficiently, even it is gradual, and 
limited by the trade-off between compute budget and compressing in a 
meaningful rate.

Hyperparameter and related details include Adam optimizer's learning 
rate of $0.001$, $ReLU$ activations, fixed {\it Xavier} initialization, 
batch normalization for the baseline network architecture with fully 
connected layers. We utilize PyTorch \cite{paszke17} for deep learning and 
cvxpy \cite{diamond16} for DTC steps. 

In the conducted experiments,  two more different clipping approach 
is implement for fair comparison. Namely random and magnitude pruning. 
Random clipping is just randomly drops neurons at the specified 
sparsity levels. In the  case of magnitude pruning we simply rank 
the highest weight inputs to given neurons. In total we run three 
compress-train cycles to compare the test performances, as summarized
in Table 1. 

\begin{table}[h]
\centering
\caption{Test accuracy for different pruning approaches over different 
sparsity levels in iterative pruning.}
\label{tab:pruning_comparison}
\begin{tabular}{llll}
Sparsity & Magnitude & Random  & DTC  \\
0.1016 & 0.9638 & 0.9637 & 0.9663 \\
0.1914 & 0.9675 & 0.9669 & 0.9687 \\
0.3477 & 0.9735 & 0.9723 & 0.9728 \\
0.4141 & 0.9745 & 0.9736 & 0.9735 \\
0.5254 & 0.9740 & 0.9747 & 0.9732 \\
0.6172 & 0.9730 & 0.9748 & 0.9732 \\
0.7520 & 0.9718 & 0.9719 & 0.9705 \\
0.8594 & 0.9560 & 0.9588 & 0.9609 \\
0.9121 & 0.9324 & 0.9399 & 0.9463 \\
0.9219 & 0.9234 & 0.9341 & 0.9423 \\
0.9297 & 0.9144 & 0.9292 & 0.9371 \\
0.9512 & 0.8703 & 0.9157 & 0.9153 \\
0.9707 & 0.8293 & 0.8834 & 0.8717 \\
0.9785 & 0.8138 & 0.8673 & 0.8500 \\
0.9824 & 0.8060 & 0.8589 & 0.8391 \\
\end{tabular}
\end{table}

\subsection{Learning performance over compression cycles}

We record the test accuracy of the reduced architecture after 
each train-compress cycle for different pruning approaches: 
magnitude only, random and DTC based magnitude pruning. As shown 
in Table 1, Figure 1 and Figure 2. DTC and random pruning perform 
superior. We observe that DTC shows a good performance even against 
random pruning after 85 percent sparsity. However, random pruning 
is shown remarkable resilience. But we have clearly shown that DTC 
outperforms magnitude pruning.  

For very high sparsity levels after 95 percent, randomness shows 
a bit better performance. This isn't related to network initialization, 
a fixed Xavier was used making results on the table fully repeatable. 
While compression process is robust to initialization that performance 
fluctuations are negligible, rather this is due to fact that after 95 percent 
compression network remain with handful of neurons in the hidden layer. 
Depending on requirements very good performance is also retained even
after 95 percent of the neurons are pruned.

\begin{figure}[h!]
\centering
  \includegraphics[width=0.48\textwidth]{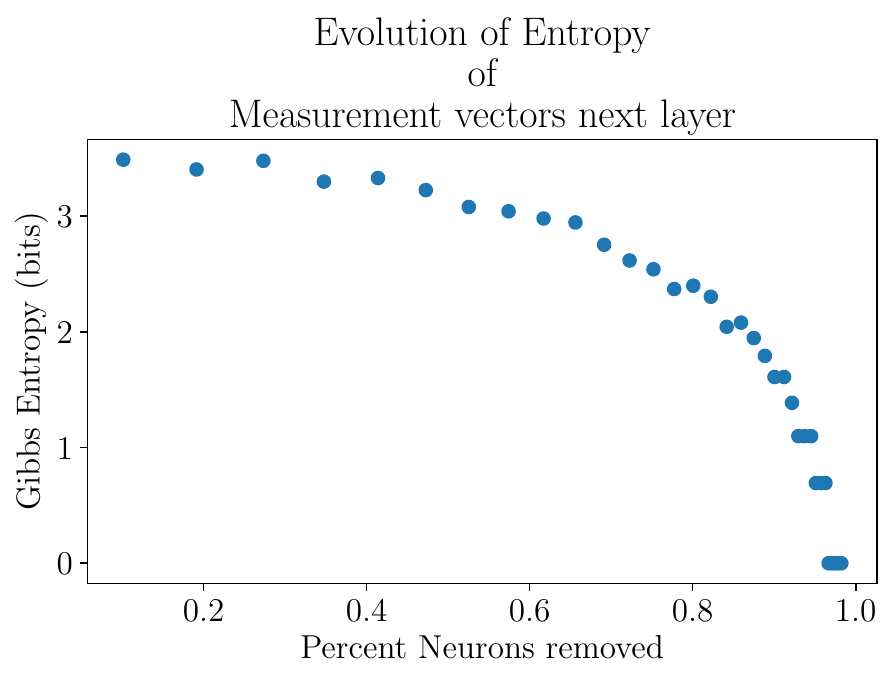}
  \caption{Next layer Gibbs entropy of computed measurements within DTC 
  over compression process.}
  \label{gibbs2}
\end{figure}

Total weight rays in Figure 3 are shown to be more pronounced for very 
high sparsity levels. This signifies, attenuation of learned weight as 
model gets much smaller to compensate the performance loss. This is 
{\it a striking result} as DTC sparsify the weights via inverse 
compressed sensing.

\subsection{Evidence for the proposition}

We also reported Gibbs entropy over train-compress iterative pruning
in Figures 4 and 5. As complexity reduces, Gibbs entropy over measurement 
vectors also decreases. Pearson correlations between these and performance 
reduction are computed for DTC and random pruning. They read
0.9174 and 0.9412 respectively, empirically supporting our proposition.
This result demonstrated a strong empirical support for the Theorem 1, 
Gibbs-randomness compression proposition. While both DTC and random pruning 
have obviously random mechanism the Gibbs-compression proposition requires. 
As random pruning correlations is much higher, this strengthen the proposition's 
statement. Moreover comonotonic behavior is also verified by the visual 
inspection between Figure 1 and 3 versus Figure 4 and 5.

\section{Conclusion} \label{sec:con}

A quantitative relationship between randomness and compression
is proposed. The proposition relies on comonotonic relationship
between performance of compression and the degree of randomness 
measured on the a learning model being compressed. We have 
demonstrated that there is indeed almost perfect comonotonic 
relationship between randomness and compression for a learning
system as a showcase on a baseline model. This finding contributes to the 
current discussions on how compression and learning 
models are related and the role of randomness in learning 
from statistical physics perspective.  

\section*{Acknowledgment}
We are grateful to Y.S\"uzen for her kind support of this work. 
Author is also grateful for constructive comments from 
the {\it Neural Networks} community, strengthen the paper's 
technical case and presentation.

\bibliographystyle{iopart-num}
\bibliography{suzen}

@book{welling26,
  title         = {Generative AI and Stochastic Thermodynamics},
  author        = {Max Welling and Sirui Lui and Lars Holdijk},
  year          = {2026},
  publisher     = {Cambridge University Press},
  address       = {Cambridge},
  url           = {https://www.cambridge.org/tr/universitypress/subjects/computer-science/artificial-intelligence-and-natural-language-processing/generative-ai-and-stochastic-thermodynamics-tale-free-energies},
  isbn          = {9781009709026}
}

@article{solomonoff64,
  author = {R. J. Solomonoff},
  title = {{A formal theory of inductive inference, Parts I and II}},
  journal = {Information and Control},
  volume = {7},
  pages = {1--22, 224--254},
  year = {1964},
  url={https://doi.org/10.1016/S0019-9958(64)90223-2},
  doi={10.1016/S0019-9958(64)90223-2}
}

@article{kolmogorov65,
  author = {A. N. Kolmogorov},
  title = {Three approaches to the quantitative definition of information},
  journal = {Problems of Information Transmission},
  volume = {1},
  number = {1},
  pages = {1--7},
  year = {1965},
  note = {English translation: Problems of Information Transmission, 1, 1--7, 1965}
}

@article{chaitin66,
  author = {G. J. Chaitin},
  title = {On the length of programs for computing finite binary sequences},
  journal = {Journal of the ACM},
  volume = {13},
  pages = {547--569},
  year = {1966}
}

@article{chaitin75,
  title={Randomness and mathematical proof},
  author={Chaitin, Gregory J},
  journal={Scientific American},
  volume={232},
  number={5},
  pages={47--53},
  year={1975},
  publisher={JSTOR}
}

@article{bennett82,
  author = {C. H. Bennett},
  title = {The thermodynamics of computation—a review},
  journal = {International Journal of Theoretical Physics},
  volume = {21},
  pages = {905--940},
  year = {1982}
}

@article{zurek89,
  author = {W. H. Zurek},
  title = {Algorithmic randomness and physical entropy},
  journal = {Physical Review A},
  volume = {40},
  pages = {4731--4751},
  year = {1989}
}

@article{machta99a,
  title={Entropy, information, and computation},
  author={Machta, Jonathan},
  journal={American Journal of Physics},
  volume={67},
  number={12},
  pages={1074--1077},
  year={1999},
  publisher={American Association of Physics Teachers}
}

@article{jaynes65a,
  title={{Gibbs vs. Boltzmann entropies}},
  author={Jaynes, Edwin T},
  journal={American Journal of Physics},
  volume={33},
  number={5},
  pages={391--398},
  year={1965}
}

@article{jaynes57a,
  title={Information theory and statistical mechanics},
  author={Jaynes, Edwin T},
  journal={{Physical Review}},
  volume={106},
  number={4},
  pages={620},
  year={1957},
  publisher={APS}
}

@article{buonsante16,
  title={{On the dispute between Boltzmann and Gibbs entropy}},
  author={Buonsante, Pierfrancesco and Franzosi, Roberto and Smerzi, Augusto},
  journal={Annals of Physics},
  volume={375},
  pages={414--434},
  year={2016},
  publisher={Elsevier}
}

@article{rondoni00,
  title={Gibbs entropy and irreversible thermodynamics},
  author={Rondoni, Lamberto and Cohen, EGD},
  journal={Nonlinearity},
  volume={13},
  number={6},
  pages={1905},
  year={2000},
  publisher={IOP Publishing}
}

@article{sherrington,
  title={Solvable model of a spin-glass},
  author={Sherrington, David and Kirkpatrick, Scott},
  journal={Physical review letters},
  volume={35},
  number={26},
  pages={1792},
  year={1975},
  publisher={APS}
}

@article{hopfield82,
  author={John Hopfield},
  title={Neural networks and physical systems with emergent collective computational abilities.},
  journal={Proceedings of the national academy of sciences},
  volume={79},
  number={8},
  pages={2554--2558},
  year={1982}
}

@article{ackley85,
  title={A learning algorithm for Boltzmann machines},
  author={Ackley, David H and Hinton, Geoffrey E and Sejnowski, Terrence J},
  journal={Cognitive science},
  volume={9},
  number={1},
  pages={147--169},
  year={1985},
  publisher={Elsevier}
}

@article{compagner91,
  title={Definitions of randomness},
  author={Compagner, Aaldert},
  journal={American Journal of Physics},
  volume={59},
  number={8},
  pages={700--705},
  year={1991},
  publisher={American Association of Physics Teachers}
}

@technicalreport{oneill14,
    title = {{PCG: A Family of Simple Fast Space-Efficient Statistically Good Algorithms for Random Number Generation}},
    author = {{Melissa E. O'Neill}},
    institution = {{Harvey Mudd College}},
    address = {{Claremont, CA}},
    number = {{HMC-CS-2014-0905}},
    year = {{2014}},
    xurl = {{"https://www.cs.hmc.edu/tr/hmc-cs-2014-0905.pdf"}},
}

@book{li08,
  title={An introduction to Kolmogorov complexity and its applications},
  author={Li, Ming and Vit{\'a}nyi, Paul},
  volume={3},
  year={2008},
  publisher={Springer}
}

@incollection{zurek91,
  title={Algorithmic information content, Church—Turing thesis, physical entropy, and Maxwell’s demon},
  author={Zurek, Wojciech H},
  booktitle={Information Dynamics},
  pages={245--259},
  year={1991},
  publisher={Springer}
}

@article{wagner12,
  title={The role of randomness in Darwinian evolution},
  author={Wagner, Andreas},
  journal={Philosophy of Science},
  volume={79},
  number={1},
  pages={95--119},
  year={2012},
  publisher={Cambridge University Press}
}

@article{donoho06a,
  title={Compressed sensing},
  author={Donoho, David L},
  journal={IEEE Transactions on information theory},
  volume={52},
  number={4},
  pages={1289--1306},
  year={2006},
  publisher={IEEE}
}

@inproceedings{candes06,
  title={Compressive sampling},
  author={Cand{\`e}s, Emmanuel J and others},
  booktitle={Proceedings of the international congress of mathematicians},
  volume={3},
  pages={1433--1452},
  year={2006},
  organization={Madrid, Spain}
}

@article{candes06tao,
  title={Near-optimal signal recovery from random projections: Universal encoding strategies?},
  author={Candes, Emmanuel J and Tao, Terence},
  journal={IEEE transactions on information theory},
  volume={52},
  number={12},
  pages={5406--5425},
  year={2006},
  publisher={IEEE}
}

@article{baraniuk07,
  title={Compressive sensing [lecture notes]},
  author={Baraniuk, Richard G},
  journal={IEEE signal processing magazine},
  volume={24},
  number={4},
  pages={118--121},
  year={2007},
  publisher={IEEE}
}

@book{eldar12a,
  title={Compressed sensing: theory and applications},
  author={Eldar, Yonina C and Kutyniok, Gitta},
  year={2012},
  publisher={{Cambridge University Press}}
}

@article{janowsky89,
  title={Pruning versus clipping in neural networks},
  author={Janowsky, Steven A},
  journal={Physical Review A},
  volume={39},
  number={12},
  pages={6600},
  year={1989},
  publisher={APS}
}

@article{lecun89,
  title={Optimal brain damage},
  author={LeCun, Yann and Denker, John and Solla, Sara},
  journal={Advances in neural information processing systems},
  volume={2},
  year={1989}
}

@article{shwartz2024compress,
  title={To compress or not to compress—self-supervised learning and information theory: A review},
  author={Shwartz Ziv, Ravid and LeCun, Yann},
  journal={Entropy},
  volume={26},
  number={3},
  pages={252},
  year={2024},
  publisher={MDPI}
}

@article{lecun15a,
  title={Deep learning},
  author={LeCun, Yann and Bengio, Yoshua and Hinton, Geoffrey},
  journal={Nature},
  volume={521},
  number={7553},
  pages={436--444},
  year={2015},
  publisher={Nature Publishing Group UK London}
}

@article{schmidhuber15a,
  title={Deep learning in neural networks: An overview},
  author={Schmidhuber, J{\"u}rgen},
  journal={Neural networks},
  volume={61},
  pages={85--117},
  year={2015},
  publisher={Elsevier}
}

@article{vaswani17,
  title={Attention is all you need},
  author={Vaswani, Ashish and Shazeer, Noam and Parmar, Niki and Uszkoreit, Jakob and Jones, Llion and Gomez, Aidan N and Kaiser, {\L}ukasz and Polosukhin, Illia},
  journal={{Advances in Neural Information Processing Systems}},
  volume={30},
  year={2017}
}

@inproceedings{sohl15,
  title={Deep unsupervised learning using nonequilibrium thermodynamics},
  author={Sohl-Dickstein, Jascha and Weiss, Eric and Maheswaranathan, Niru and Ganguli, Surya},
  booktitle={{International Conference on Machine Learning}},
  pages={2256--2265},
  year={2015},
  organization={{PMLR}}
}

@article{menghani23,
  title={Efficient deep learning: A survey on making deep learning models smaller, faster, and better},
  author={Menghani, Gaurav},
  journal={ACM Computing Surveys},
  volume={55},
  number={12},
  pages={1--37},
  year={2023},
  publisher={ACM New York, NY}
}

@article{hoefler2021sparsity,
  title={Sparsity in deep learning: Pruning and growth for efficient inference and training in neural networks},
  author={Hoefler, Torsten and Alistarh, Dan and Ben-Nun, Tal and Dryden, Nikoli and Peste, Alexandra},
  journal={Journal of Machine Learning Research},
  volume={22},
  number={241},
  pages={1--124},
  year={2021}
}

@article{han2021dynamic,
  title={Dynamic neural networks: A survey},
  author={Han, Yizeng and Huang, Gao and Song, Shiji and Yang, Le and Wang, Honghui and Wang, Yulin},
  journal={IEEE transactions on pattern analysis and machine intelligence},
  volume={44},
  number={11},
  pages={7436--7456},
  year={2021},
  publisher={IEEE}
}

@inproceedings{tatarnikova2025optimization,
  title={Optimization of Neural Networks Through Semi-Structured Activation Pruning},
  author={Tatarnikova, TM and Raskopina, AS},
  booktitle={2025 Wave Electronics and its Application in Information and Telecommunication Systems (WECONF)},
  pages={1--5},
  year={2025},
  organization={IEEE}
}

@inproceedings{frankle18,
  title={The Lottery Ticket Hypothesis: Finding Sparse, Trainable Neural Networks},
  author={Frankle, Jonathan and Carbin, Michael},
  booktitle={{International Conference on Learning Representations}},
  year={2018}
}

@inproceedings{carreira18,
  title={{“Learning-compression” algorithms for neural net pruning}},
  author={Carreira-Perpin{\'a}n, Miguel A and Idelbayev, Yerlan},
  booktitle={{Proceedings of the IEEE Conference on Computer Vision and Pattern Recognition}},
  pages={8532--8541},
  year={2018}
}

@article{han15,
  title={Learning both weights and connections for efficient neural network},
  author={Han, Song and Pool, Jeff and Tran, John and Dally, William},
  journal={Advances in neural information processing systems},
  volume={28},
  year={2015}
}

@inproceedings{gadhikar23,
  title={Why random pruning is all we need to start sparse},
  author={Gadhikar, Advait Harshal and Mukherjee, Sohom and Burkholz, Rebekka},
  booktitle={International Conference on Machine Learning},
  pages={10542--10570},
  year={2023},
  organization={PMLR}
}

@inproceedings{zhen21a,
  title={Sparsification via compressed sensing for automatic speech recognition},
  author={Zhen, Kai and Nguyen, Hieu Duy and Chang, Feng-Ju and Mouchtaris, Athanasios and Rastrow, Ariya},
  booktitle={ICASSP 2021-2021 IEEE International Conference on Acoustics, Speech and Signal Processing (ICASSP)},
  pages={6009--6013},
  year={2021},
  organization={IEEE}
}

@inproceedings{wu2019deep,
  title={Deep compressed sensing},
  author={Wu, Yan and Rosca, Mihaela and Lillicrap, Timothy},
  booktitle={International Conference on Machine Learning},
  pages={6850--6860},
  year={2019},
  organization={PMLR}
}

@article{machidon2023deep,
  title={Deep learning for compressive sensing: a ubiquitous systems perspective},
  author={Machidon, Alina L and Pejovi{\'c}, Veljko},
  journal={Artificial Intelligence Review},
  volume={56},
  number={4},
  pages={3619--3658},
  year={2023},
  publisher={Springer}
}

@article{lecun02,
  title={Gradient-based learning applied to document recognition},
  author={LeCun, Yann and Bottou, L{\'e}on and Bengio, Yoshua and Haffner, Patrick},
  journal={{Proceedings of the IEEE}},
  volume={86},
  number={11},
  pages={2278--2324},
  year={2002},
}

@article{odono16,
  title={{Conic Optimization via Operator Splitting and Homogeneous Self-Dual Embedding}},
  author={Brendan O'Donoghue and Eric Chu and Neal Parikh and Stephen Boyd},
  journal={Journal of Optimization Theory and Applications},
  month={June},
  volume={169},
  number={3},
  pages={1042-1068},
  year={2016}
}

@inproceedings{paszke17,
  title={Automatic differentiation in PyTorch},
  author={Paszke, Adam and Gross, Sam and Chintala, Soumith and Chanan, Gregory and Yang, Edward and DeVito, Zachary and Lin, Zeming and Desmaison, Alban and Antiga, Luca and Lerer, Adam},
  booktitle={NIPS-W},
  year={2017}
}

@article{diamond16,
  author  = {Steven Diamond and Stephen Boyd},
  title   = {{CVXPY}: {A} {P}ython-embedded modeling language for convex optimization},
  journal = {Journal of Machine Learning Research},
  year    = {2016},
  volume  = {17},
  number  = {83},
  pages   = {1--5},
}

@article{zhu25,
  title={A comprehensive review of network pruning based on pruning granularity and pruning time perspectives},
  author={Zhu, Kehan and Hu, Fuyi and Ding, Yuanbing and Zhou, Wei and Wang, Ruxin},
  journal={Neurocomputing},
  pages={129382},
  year={2025},
  publisher={Elsevier}
}

@article{zimmer25,
  title={Compression aware training of neural networks using {Frank-Wolfe}},
  author={Zimmer, Max and Spiegel, Christoph and Pokutta, Sebastian},
  journal={Mathematical Optimization for Machine Learning: Proceedings of the MATH+ Thematic Einstein Semester 2023},
  pages={137},
  year={2025},
  publisher={Walter de Gruyter GmbH \& Co KG}
}

@article{amer25,
	author = {Amer, Omar and Chakrabarti, Shouvanik and Chakraborty, Kaushik and Eloul, Shaltiel and Kumar, Niraj and Lim, Charles and Liu, Minzhao and Niroula, Pradeep and Satsangi, Yash and Shaydulin, Ruslan and Pistoia, Marco},
	doi = {10.1038/s42254-025-00845-1},
	journal = {Nature Reviews Physics},
	number = {9},
	pages = {514--524},
	title = {Applications of certified randomness},
	url = {https://doi.org/10.1038/s42254-025-00845-1},
	volume = {7},
	year = {2025},
}

@article{pecher24, 
author = {Pecher, Branislav and Srba, Ivan and Bielikova, Maria}, 
title = {A Survey on Stability of Learning with Limited Labelled Data and its Sensitivity 
to the Effects of Randomness}, 
year = {2024}, 
volume = {57}, 
number = {1},  
url = {https://doi.org/10.1145/3691339}, 
doi = {10.1145/3691339}, 
journal = {ACM Comput. Surv.}, 
month = {Oct}, 
articleno = {19}, 
numpages = {40} 
}

@article{guo24,
	title = {Complexity is not enough for randomness},
	pages = {151},
	author = {Guo, Shiyong and Sasieta, Martin and Swingle, Brian},
	journal = {SciPost Phys.},
	volume = {17},
	year = {2024},
	publisher = {SciPost},
	doi = {10.21468/SciPostPhys.17.6.151},
	url = {https://scipost.org/10.21468/SciPostPhys.17.6.151}
}

@article{kulikov26,
	author = {Kulikov, Anatoly and Storz, Simon and Sch{\"a}r, Josua D. and Sandfuchs, Martin and Wolf, Ramona and Berterotti{\`e}re, Florence and Hellings, Christoph and Wallraff, Andreas and Renner, Renato},
	doi = {10.1038/s41586-026-10521-8},
	journal = {Nature},
	number = {8116},
	pages = {1033--1038},
	title = {Experimental randomness amplification},
	url = {https://doi.org/10.1038/s41586-026-10521-8},
	volume = {653},
	year = {2026},
}

@article{luecher25,
  title={Exploring the intersection of natural sciences and information technology via entropy and randomness},
	author = {Luescher, Anne M. and Heckel, Reinhard and Grass, Robert N.},
	doi = {10.1038/s41467-025-62353-1},
	journal = {Nature Communications},
	number = {1},
	pages = {6969},
	url = {https://doi.org/10.1038/s41467-025-62353-1},
	volume = {16},
	year = {2025},
}

@article{wu26,
    author = {Wu, Jianzhong},
    title = {From Gibbs–Shannon entropy and microscopic reversibility to entropy production, heat, and fluctuation theorems},
    journal = {The Journal of Chemical Physics},
    volume = {164},
    number = {24},
    pages = {244105},
    year = {2026},
    month = {06},
    doi = {10.1063/5.0342836},
    url = {https://doi.org/10.1063/5.0342836}
}

\end{document}